\documentclass[10pt]{article} 

\usepackage[preprint]{tmlr}

\usepackage{amsmath,amsfonts,bm}

\def\eqref#1{equation~\ref{#1}}

\def\1{\bm{1}}

\DeclareMathAlphabet{\mathsfit}{\encodingdefault}{\sfdefault}{m}{sl}
\SetMathAlphabet{\mathsfit}{bold}{\encodingdefault}{\sfdefault}{bx}{n}

\usepackage{hyperref}
\usepackage{url}
\usepackage{lmodern}
\usepackage{microtype}
\usepackage{amsmath,amssymb,amsfonts,amsthm,mathtools}
\usepackage{bm}
\usepackage{bbm}
\usepackage{mathrsfs}
\usepackage{enumitem}
\usepackage{booktabs}
\usepackage{multirow}
\usepackage{array}
\usepackage{xcolor}
\usepackage[nameinlink,capitalise,noabbrev]{cleveref}
\usepackage{graphicx}
\usepackage{tikz-cd}
\usepackage{algorithm}
\usepackage{algpseudocode}
\usepackage{caption}
\usepackage{subcaption}
\usepackage{setspace}
\usepackage{comment}
\usepackage{float}              
\usepackage[section]{placeins}  
\usepackage[table]{xcolor}

\newtheorem{theorem}{Theorem}[section]
\newtheorem{proposition}[theorem]{Proposition}

\newtheorem{corollary}[theorem]{Corollary}
\newtheorem{definition}[theorem]{Definition}

\newtheorem{remark}[theorem]{Remark}

\title{Semantic Sections: An Atlas-Native Feature Ontology for Obstructed Representation Spaces}

\author{\name Hossein Javidnia \email hossein.javidnia@dcu.ie \\
      \addr School of Computing\\
      Dublin City University, Ireland}

\def\month{03}  
\def\year{2026} 
\def\openreview{\url{https://openreview.net/forum?id=XXXX}} 

\begin{document}

\maketitle

\begin{abstract}
Recent interpretability work often treats a feature as a single global direction, dictionary atom, or latent coordinate shared across all contexts.
However, recent atlas-based work has shown that this ontology can fail in obstructed representation spaces: proxy shearing and nontrivial holonomy imply that locally coherent meanings need not assemble into one globally consistent feature \cite{javidnia2026gaugetheory}.
In this paper, we introduce an atlas-native replacement object, the \emph{semantic section}: a transport-compatible family of local feature representatives defined over a context atlas.
We formalize semantic sections, prove that tree-supported propagation is always pathwise realizable, and show that cycle consistency is the decisive criterion for genuine globalization.
This yields a principled distinction between \emph{tree-local}, \emph{globalizable}, and \emph{twisted} sections, with twisted sections capturing locally coherent but holonomy-obstructed meanings.

We then develop a discovery-and-certification pipeline for semantic sections in frozen LLM atlases based on seeded propagation, synchronization across overlaps, defect-based pruning, cycle-aware taxonomy, and deduplication.
Across layer-16 atlases for \texttt{Llama~3.2~3B~Instruct} \cite{grattafiori2024llama3,meta2024llama32card}, \texttt{Qwen~2.5~3B~Instruct} \cite{qwen2024qwen25,qwen2024qwen25card}, and \texttt{Gemma~2~2B~IT} \cite{gemmateam2024gemma2,google2024gemma2card}, we find nontrivial populations of semantic sections, including cycle-supported globalizable and twisted regimes after deduplication.
Gemma exhibits the richest cycle-supported structure, while Qwen provides a useful control regime in which tree-local coherence dominates.
The empirical picture is robust to threshold variation and, for Llama, persists across multiple layers.

Most importantly, we show that semantic identity is not recovered by raw global-vector similarity.
Even for certified globalizable sections, cross-chart signed cosine similarity between local representatives is typically low, and raw similarity baselines recover only a small fraction of true within-section pairs, often collapsing to zero at moderate thresholds.
By contrast, section-based identity recovery is perfect on certified supports.
These results support the claim that, in obstructed regimes, a feature is better modeled as a semantic section than as a single global vector. 

\smallskip
\noindent\textbf{Code:} \url{https://github.com/hosseinjavidnia/gauge_superposition}
\end{abstract}

\section{Introduction}
\label{sec:introduction}

Much of modern mechanistic interpretability is built around a simple ontological assumption:
a \emph{feature} is a single object that can be represented globally, for example as one direction in activation space, one sparse dictionary atom, one latent coordinate, or one neuron-like axis \cite{olah_feature_2017,bau2017networkdissection,kim2018tcav,olah2020zoom,elhage2022toy,cunningham2023sparse,bricken2023monosemanticity}.
This assumption is powerful because it turns interpretability into a search problem over globally shared objects.
If a feature is globally well-defined, then one can hope to identify it once, name it once, and track it everywhere.

However, this picture can fail when neural representations are only locally compatible.
Recent atlas-based work has shown that local feature structure can organize into a nontrivial context atlas with learned transition maps between overlapping charts, and that these transitions can exhibit both \emph{proxy shearing} and \emph{holonomy} \cite{javidnia2026gaugetheory}.
In such regimes, there need not exist one globally consistent feature ontology.
A candidate meaning may look coherent in each local chart while still failing to glue into a single global direction.
This creates a conceptual gap in current interpretability practice:
if global features fail, what is the correct replacement object?

This paper proposes an answer.
We argue that, in obstructed representation spaces, the right primitive is not a single global vector but a \emph{semantic section}:
a transport-compatible family of local feature representatives defined over a context atlas.
Intuitively, a semantic section treats a feature as something that may have different local representatives in different charts, provided these representatives are related by the atlas transport structure.
The central question is therefore no longer whether one vector persists globally, but whether a family of local representatives can be propagated, synchronized, and made cycle-consistent across the atlas.

This shift has both theoretical and empirical consequences.
On the theory side, it replaces the global-feature ontology with a local-to-global one.
We show that tree-supported propagation is always pathwise realizable, but that genuine semantic globalization on cyclic supports is controlled by loop consistency.
This yields a natural distinction between three qualitatively different regimes:
\emph{tree-local} sections, which are coherent only in the weak pathwise sense;
\emph{globalizable} sections, which remain coherent around loops; and
\emph{twisted} sections, which are locally coherent but fail to globalize because loop transport returns a rotated representative.
In this way, holonomy becomes not merely a diagnostic of atlas geometry, but an obstruction to single-valued semantic identity.

On the empirical side, the semantic-section view leads to a new discovery problem.
Rather than fitting one global feature vector, we begin from a local seed representative, propagate it through the chart graph, synchronize multiply reached charts, prune by transport defect, and classify the resulting object by its cycle structure and loop behavior.
This yields a discovery-and-certification pipeline for semantic sections in frozen LLM atlases.
Applied to layer-$16$ atlases for \texttt{Llama~3.2~3B~Instruct} \cite{grattafiori2024llama3,meta2024llama32card}, \texttt{Qwen~2.5~3B~Instruct} \cite{qwen2024qwen25,qwen2024qwen25card}, and \texttt{Gemma~2~2B~IT} \cite{gemmateam2024gemma2,google2024gemma2card}, the pipeline finds nontrivial populations of globalizable, twisted, and tree-local sections after deduplication.
Gemma exhibits the richest cycle-supported section structure, Llama realizes the full taxonomy more sparsely, and Qwen provides a useful control regime in which tree-local coherence dominates.

The most important empirical result is that semantic identity is not recovered by raw global-vector similarity.
Even when a discovered object is strongly coherent as a semantic section, its local representatives are typically not close as one shared vector across charts.
In aggregate, section-based identity recovery is perfect on certified supports, while raw signed-cosine similarity recovers only a small fraction of true within-section pairs and often collapses to zero at moderate thresholds.
For example, across $14$ deduplicated globalizable sections in the Gemma layer-$16$ atlas, comprising $1221$ true within-section chart pairs, raw similarity recovers only $19.57\%$ of pairs at threshold $0.3$, only $2.62\%$ at threshold $0.5$, and none at threshold $0.7$.
These results suggest that, in obstructed regimes, the global-vector ontology fails at the basic task of semantic identity recovery, whereas semantic sections succeed because they are the correct atlas-native object.

The contribution of the paper is therefore not merely a new diagnostic or a new clustering heuristic.
It is a proposal for a different feature ontology.
Our claim is not that global features are never useful, but that they are not the right primitive in the presence of representation-theoretic obstruction.
When local meanings fail to assemble into one globally stable vector, they should instead be modeled as transport-compatible local sections.

\paragraph{Contributions.}
The main contributions of this paper are:
\begin{enumerate}[leftmargin=2em]
\item We introduce \emph{semantic sections} as an atlas-native replacement for the global-feature ontology in obstructed representation spaces.
\item We formalize tree-local, globalizable, and twisted section regimes, and identify cycle consistency and holonomy as the key local-to-global criteria.
\item We develop a practical discovery-and-certification pipeline for semantic sections in frozen LLM atlases.
\item We provide cross-model evidence that semantic sections are empirically discoverable, structurally heterogeneous, and robust to threshold and layer variation.
\item We show that semantic identity is recovered by sections but not by raw global-vector similarity, supporting the claim that features in obstructed regimes are better modeled as semantic sections than as single global vectors.
\end{enumerate}

The remainder of the paper proceeds as follows.
Section~\ref{sec:semantic-sections} introduces the semantic-section formalism and its local-to-global theory.
Section~\ref{sec:discovery-certification} describes the discovery pipeline and empirical protocol.
Section~\ref{sec:experiments} presents the empirical results, beginning with a cross-model census of section types and then analyzing geometric profiles, representative exemplars, identity recovery, and robustness across layers and thresholds.

\section{Related Work}
\label{sec:related-work}

\paragraph{Mechanistic interpretability and global feature ontologies.}
A large body of mechanistic interpretability work treats internal model structure as decomposable into globally meaningful units, such as neurons, directions, circuits, or learned latent features \cite{olah_feature_2017,bau2017networkdissection,kim2018tcav,olah2020zoom,olsson2022induction,wang2022ioi,elhage2022toy,cunningham2023sparse,bricken2023monosemanticity}.
This perspective has been highly productive, especially in settings where one can identify reusable directions or sparse atoms that appear stable across many contexts.
Our work is continuous with this tradition in its goal of isolating interpretable semantic structure, but differs in the ontological object it assigns semantic identity to.
Rather than assuming that one globally shared vector is the right primitive, we ask what replaces that primitive when representation geometry is only locally compatible.

\paragraph{Superposition, sparse decompositions, and monosemantic features.}
Recent work on superposition, circuits, and sparse autoencoders has sharpened the idea that neurons are often not the correct unit of analysis, and that more interpretable latent structure can instead be recovered as directions, mechanisms, or dictionary atoms in activation space \cite{olah2020zoom,olsson2022induction,wang2022ioi,elhage2022toy,cunningham2023sparse,bricken2023monosemanticity}.
These approaches provide strong evidence that feature-like structure can often be recovered by overcomplete or sparse decompositions.
However, they typically still assume a \emph{global} feature ontology: once identified, a feature is treated as one object shared across contexts.
The semantic-section framework proposed here is compatible with the motivation behind these methods, but departs from their default ontology in obstructed regimes.
Our claim is that when transition structure between local charts is nontrivial, semantic identity may be represented not by one global atom but by a transport-compatible family of local representatives.

\paragraph{Geometric and local-to-global views of representation.}
Our work is also related to broader geometric perspectives on representation learning, in which structure is understood through local linearization, manifold organization, intrinsic dimension, or coordinate-dependent descriptions rather than through one preferred global basis \cite{bengio2013representation,rifai2011manifold,fefferman2016manifold,ansuini2019intrinsic,cohen2020separability}.
The closest precursor is the recent atlas-based obstruction framework of \cite{javidnia2026gaugetheory}, which models neural representations as a context atlas with local charts and overlap transports, and identifies proxy shearing and holonomy as obstructions to globally consistent feature structure.
The present paper builds directly on that framework.
Where \cite{javidnia2026gaugetheory} diagnoses when a global ontology can fail, we ask what semantic object should replace it.
Our answer is the semantic section: an atlas-native local-to-global object whose coherence is defined by transport compatibility and whose globalization is controlled by cycle consistency.

\paragraph{What is new here.}
This paper is therefore not merely another feature-discovery method, circuit analysis, or sparse decomposition technique.
Its main novelty is ontological.
We introduce semantic sections as the appropriate replacement for global feature vectors in obstructed representation spaces, formalize the distinction between tree-local, globalizable, and twisted sections, and provide empirical evidence that semantic identity can be recovered by sections even when raw global-vector similarity fails.
In this sense, the paper extends atlas-based representation theory from a geometric diagnosis of obstruction to a semantic theory of feature identity.

\section{Semantic sections and atlas-native features}
\label{sec:semantic-sections}

Recent atlas-based work showed that a global feature ontology can fail in obstructed representation spaces: local feature structure may be compatible chartwise while still failing to assemble into one globally consistent semantic object because of proxy shearing and nontrivial holonomy \cite{javidnia2026gaugetheory}.
We now introduce the replacement object for such regimes: a \emph{semantic section}, namely a transport-compatible family of local feature representatives defined over a context atlas.

\subsection{Semantic sections}

Let $G=(V,E)$ be the context-chart graph of an atlas in the sense of \cite{javidnia2026gaugetheory}.
For each chart $c\in V$, let $F_c \cong \mathbb{R}^k$ denote its local feature space, and for each oriented edge $u\to v$ let
\[
T_{vu}:F_u\to F_v
\]
denote the transport map induced on the overlap from chart $u$ to chart $v$.
In practice, $T_{vu}$ may be taken to be an orthogonal proxy transport, the polar factor of a learned edge map, or another fixed transport operator derived from the atlas construction.

The basic semantic idea is that a feature need not be represented by one globally shared vector.
Instead, it may be represented by a family of local vectors, one per chart, provided these local representatives are compatible under transport.

\begin{definition}[Semantic section]
\label{def:semantic-section}
Let $H=(V_H,E_H)$ be a connected subgraph of $G$.
A \emph{semantic section} on $H$ is a family
\[
s=\{s_c\}_{c\in V_H}, \qquad s_c\in F_c\setminus\{0\},
\]
considered up to independent positive rescaling and sign in each chart.
For an oriented edge $u\to v$ in $E_H$, define the edge defect
\[
\delta_{vu}(s)
:=
\inf_{\sigma\in\{\pm 1\}}
\left\|
\frac{T_{vu}s_u}{\|T_{vu}s_u\|_2}
-
\sigma \frac{s_v}{\|s_v\|_2}
\right\|_2,
\]
whenever $T_{vu}s_u\neq 0$.
We say that $s$ is \emph{$\varepsilon$-coherent} on $H$ if
\[
\delta_{vu}(s)\le \varepsilon
\qquad
\text{for every oriented edge }u\to v\text{ in }E_H.
\]
\end{definition}

The quotient by sign reflects the fact that many local feature representations are semantically identified up to orientation, while the quotient by positive rescaling reflects that semantic identity is directional rather than norm-dependent.
Thus a semantic section is not one global vector, but a locally defined family of representatives whose identity is tracked by transport across overlaps.

\subsection{Tree-local extension}

The first basic fact is that on a tree, local compatibility is pathwise trivial.

\begin{proposition}[Tree extension]
\label{prop:tree-extension}
Let $H=(V_H,E_H)$ be a connected tree, let $r\in V_H$ be a root, and let $s_r\in F_r\setminus\{0\}$ be a seed representative.
Assume that along every edge of the rooted tree, the transported representative is nonzero.
Then there exists a normalized section $s=\{s_c\}_{c\in V_H}$, unique up to an overall sign choice at the root, such that for every parent-child edge $u\to v$ in the rooted tree,
\[
\frac{s_v}{\|s_v\|_2}
=
\sigma_{vu}\,
\frac{T_{vu}s_u}{\|T_{vu}s_u\|_2}
\qquad
\text{for some }\sigma_{vu}\in\{\pm 1\}.
\]
In particular, every seed extends uniquely along the tree by recursive transport and normalization.
\end{proposition}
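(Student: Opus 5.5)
We prove existence by constructing $s$ recursively along the rooted tree, and uniqueness by induction on depth. The only structural fact needed is that in a rooted tree every non-root vertex $v$ has exactly one parent $p(v)$, and the parent--child edges $p(v)\to v$ are exactly the edges of $E_H$ oriented away from $r$. Hence the constraint in the statement involves each non-root vertex exactly once. There are no competing constraints from multiple incoming paths, which is precisely what fails on cyclic supports.

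\emph{Existence.} Order $V_H$ by breadth-first search from $r$, so that every vertex appears after its parent. Set $s_r := s_r/\|s_r\|_2$; this leaves the direction unchanged, so $T_{p(v)r}s_r\neq 0$ is preserved whenever needed. Processing vertices in BFS order, for each non-root $v$ with parent $u=p(v)$ we already have a normalized $s_u\neq 0$. The nonvanishing hypothesis gives $T_{vu}s_u\neq 0$, so we may define
\[
s_v := \frac{T_{vu}s_u}{\|T_{vu}s_u\|_2}.
\]
This is a unit vector, in particular nonzero, and it satisfies the displayed identity with $\sigma_{vu}=+1$. By induction on depth every $s_c$ is defined, normalized and nonzero. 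Thus $s$ is a normalized section on $H$, and it satisfies the parent--child relation on every tree edge.

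\emph{Uniqueness.} Let $s'$ be any normalized section satisfying the relations, with $s'_r=\pm s_r/\|s_r\|_2$. We show by induction on depth that $s'_c=\pm s_c$ for every $c$. For the inductive step, suppose $s'_u=\tau s_u$ with $\tau\in\{\pm1\}$. By linearity, $T_{vu}s'_u=\tau T_{vu}s_u$, and normalizing commutes with the sign. The relation then forces $s'_v=\sigma'_{vu}\tau\, s_v$, which is $\pm s_v$. Since a semantic section is by definition considered up to sign and positive rescaling in each chart, all such $s'$ define the same semantic section. If we fix signs by demanding $\sigma_{vu}=+1$ on every edge, then the root sign propagates deterministically, and the section is literally unique up to the overall root sign.

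The step needing most care is the uniqueness claim, namely stating precisely what ``unique up to an overall sign at the root'' means. Because the definition allows a free sign $\sigma_{vu}$ on each edge, genuine uniqueness of representatives holds only after fixing the convention $\sigma_{vu}=+1$ (or, equivalently, after passing to the sign quotient). I would state this convention explicitly before the induction. The remaining ingredients are routine: the well-definedness of the recursion, which rests on unique parents and the nonvanishing hypothesis, and the observation that the constructed $s$ has zero defect $\delta_{vu}(s)=0$ on every tree edge.
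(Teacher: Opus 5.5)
Your proof is correct and follows the same route as the paper: recursive transport and normalization along the unique root-to-chart path (equivalently, via unique parents), with existence coming from the nonvanishing hypothesis and uniqueness from the tree structure, which you make rigorous by induction on depth. You also observe that the free per-edge signs $\sigma_{vu}$ give uniqueness only chartwise up to sign, i.e.\ in the sign quotient of Definition~\ref{def:semantic-section}, and that literal uniqueness up to the root sign holds only after fixing $\sigma_{vu}=+1$; this is a correct sharpening of the paper's looser phrasing.
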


\begin{proof}
Because $H$ is a tree, every chart $c\in V_H$ is connected to the root $r$ by a unique simple path.
Define $s_c$ by transporting $s_r$ recursively along that path and normalizing at each step, choosing a sign on each edge as needed.
Existence follows from the nonvanishing assumption on propagated representatives.
Uniqueness up to the root sign follows from uniqueness of paths in a tree.
\end{proof}

\begin{remark}
\label{rem:tree-local-weak}
Proposition~\ref{prop:tree-extension} shows that tree-supported sections are weak evidence for semantic globalization.
On a tree there are no independent cycles, so there is no possibility of comparing incompatible transport routes.
Tree-local coherence therefore tests only pathwise propagation, not genuine local-to-global consistency.
\end{remark}

\subsection{Cycle-supported globalization}

The situation changes on cyclic supports, where multiple transport routes may connect the same charts.

\begin{definition}[Globalizable section]
\label{def:globalizable-section}
Let $H\subseteq G$ be connected.
A section $s$ on $H$ is \emph{globalizable} if it is $\varepsilon$-coherent on $H$ for small $\varepsilon$ and is approximately path-independent on $H$.
Equivalently, transporting a local representative along two different paths with the same endpoints yields normalized representatives that agree up to sign and small error.
\end{definition}

To formalize the cyclic obstruction, let
\[
\gamma=(c_0,c_1,\dots,c_L=c_0)
\]
be an oriented cycle in $H$.
Define the cycle transport
\[
\mathcal{T}_\gamma
:=
T_{c_0 c_{L-1}}\,T_{c_{L-1} c_{L-2}}\cdots T_{c_2 c_1}T_{c_1 c_0}.
\]
For a base representative $s_{c_0}$, define the cycle defect
\[
\Delta_\gamma(s)
:=
\inf_{\sigma\in\{\pm 1\}}
\left\|
\frac{\mathcal{T}_\gamma s_{c_0}}{\|\mathcal{T}_\gamma s_{c_0}\|_2}
-
\sigma \frac{s_{c_0}}{\|s_{c_0}\|_2}
\right\|_2,
\]
whenever $\mathcal{T}_\gamma s_{c_0}\neq 0$.

The cycle defect measures the failure of loop transport to return the same local representative.
When $\Delta_\gamma(s)=0$, the section closes consistently around the loop $\gamma$ up to sign; when it is positive, loop transport twists the representative away from itself.
Given a seed representative at a root chart, we say that propagation on $H$ is path-independent if any two transport paths from the root to the same chart yield normalized representatives that agree up to sign.
The following result makes precise that cycle consistency is exactly the criterion for path-independent propagation.

\begin{theorem}[Cycle consistency criterion]
\label{thm:cycle-consistency}
Let $H\subseteq G$ be connected, let $r\in V_H$, and let $s_r\in F_r\setminus\{0\}$ be a seed representative.
A normalized propagated section on $H$ is path-independent if and only if
\[
\Delta_\gamma(s)=0
\]
for every cycle $\gamma$ in $H$ based at $r$.
More generally, if $\Delta_\gamma(s)\le \eta$ for every cycle in a cycle basis of $H$, then the discrepancy between normalized representatives obtained along any two paths with the same endpoints is controlled by the accumulated cycle defects.
\end{theorem}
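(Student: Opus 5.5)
The plan is to work in the projective space of lines, where normalized vectors up to sign are the natural objects. For a path $p=(r=c_0,c_1,\dots,c_m=c)$ in $H$, write $T_p:=T_{c_mc_{m-1}}\cdots T_{c_1c_0}$. Because each step in propagation only rescales positively and picks a sign, the representative obtained along $p$ is $\pm T_p s_r/\|T_p s_r\|_2$. This holds by induction on $m$, and it uses only that the propagated vectors are nonzero, exactly as assumed in Proposition~\ref{prop:tree-extension}. Path-independence then says that for any two paths $p,q$ from $r$ to $c$, the lines $[T_p s_r]$ and $[T_q s_r]$ coincide.

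For the \emph{exact criterion}, I will argue each direction separately.

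\emph{($\Rightarrow$)} A cycle $\gamma$ based at $r$ can be viewed as a path from $r$ to $r$. The trivial path gives $s_r$ itself, so path-independence forces $[\mathcal{T}_\gamma s_r]=[s_r]$, that is, $\Delta_\gamma(s)=0$.

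\emph{($\Leftarrow$)} Take two paths $p,q$ from $r$ to $c$. I form the closed walk $\gamma=\bar q\, p$, meaning go out along $p$ and return along $q$ reversed. This requires reverse transports, and I will assume $T_{uv}=T_{vu}^{-1}$. That holds for orthogonal or polar-factor transports, and I will state it explicitly as a standing hypothesis. Under it, $\mathcal{T}_\gamma=T_q^{-1}T_p$. If $\Delta_\gamma=0$, then $T_ps_r=\lambda T_qs_r$ with $\lambda\neq0$, so the two lines agree.

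One technicality remains: $\gamma$ may be a closed walk rather than a simple cycle. I handle this in two ways. First, I read ``every cycle based at $r$'' as every closed walk. Second, I note that a closed walk decomposes in the fundamental groupoid into lassos around simple cycles, each conjugated by a path back to $r$. Each lasso fixes the line $[s_r]$, because conjugation by invertible $T$ maps fixed lines to fixed lines once we know the conjugated cycle fixes the transported line. Here I have to be careful: cycles not through $r$ are only fixed along the transported representative, so I will phrase the hypothesis as being about loops at $r$.

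For the \emph{quantitative part}, I use the fact that a cycle basis generates every closed walk at $r$ as a product of conjugates $\beta_i^{\pm1}$ of basis loops. Concretely, fix a spanning tree $\mathcal{S}$ and define the basis cycles $\gamma_e$, one for each non-tree edge $e$, each based at $r$ via tree paths. Any path $p$ to $c$ is then homotopic in the graph to the tree path $t_c$ composed with the successive fundamental loops it traverses. I estimate the discrepancy by a telescoping triangle inequality on the projective (sign-quotiented) chordal distance
\[
d([x],[y]):=\min_{\sigma}\bigl\|x/\|x\|_2-\sigma y/\|y\|_2\bigr\|_2,
\]
which is a metric on lines. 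Orthogonal $T$ act isometrically for $d$. Hence inserting one loop $\gamma_e$ changes the endpoint line by at most $\Delta_{\gamma_e}$ evaluated at the current representative.

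The \emph{main obstacle} is that $\Delta_{\gamma_e}$ is defined at a specific representative, whereas along a long word the loop is applied to a drifted vector. I would resolve this with Lipschitz control. For orthogonal $\mathcal{T}_\gamma$,
\[
d([\mathcal{T}_\gamma x],[x])\le d([\mathcal{T}_\gamma y],[y])+2\,d([x],[y]),
\]
which yields a bound of the form $d([T_ps_r],[T_qs_r])\le\sum_i C_i\eta$ with constants growing in the word length. In the general nonorthogonal case I would state the constants in terms of conditioning of the transports. The result is an ``accumulated cycle defects'' bound, which is the claimed control.
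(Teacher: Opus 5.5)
Your proposal is correct and takes the same route as the paper's sketch. The forward direction compares a loop at $r$ with the trivial path, the converse closes two paths into $\bar q\,p$, and the quantitative part decomposes paths over a (spanning-tree) cycle basis and applies triangle inequalities on sign-quotiented normalized vectors. You also make explicit what the sketch leaves implicit: $T_{uv}=T_{vu}^{-1}$, the reading of ``cycle based at $r$'' as closed walks at $r$, and the use of fundamental loops at $r$ rather than an arbitrary cycle basis. The last restriction is necessary, because the lassos of an arbitrary cycle basis need not generate $\pi_1(H,r)$, so your opening ``fact'' overstates.

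One simplification removes your drift obstacle for orthogonal transports. Let $x$ be the current representative at $r$ and $\ell$ the next basis loop, and telescope as $d([\mathcal{T}_\ell x],[s_r])\le d([\mathcal{T}_\ell x],[\mathcal{T}_\ell s_r])+d([\mathcal{T}_\ell s_r],[s_r])=d([x],[s_r])+\Delta_\ell$. This gives the additive bound $\sum_j\Delta_{\ell_j}$ over the loops of the word, instead of constants that compound with word length.
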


\begin{proof}[Proof sketch]
If propagation is path-independent, then transporting around any closed loop returns the same normalized representative up to sign, so every cycle defect vanishes.
Conversely, if two paths from $r$ to a chart $c$ produce different normalized representatives, then composing one path with the reverse of the other yields a cycle whose defect records that discrepancy.
Thus vanishing cycle defect on all cycles implies path-independence.
The approximate statement follows by expressing arbitrary path discrepancies through a cycle basis decomposition and applying triangle inequalities after normalization.
\end{proof}

\subsection{Holonomy obstruction}

Theorem~\ref{thm:cycle-consistency} upgrades holonomy from a geometric diagnostic to a direct obstruction to semantic globalization \cite{javidnia2026gaugetheory}.

\begin{corollary}[Holonomy obstructs single-valued semantic globalization]
\label{cor:holonomy-obstructs-section}
Let $H\subseteq G$ be connected and let $s$ be an $\varepsilon$-coherent section on $H$.
If there exists a cycle $\gamma\subseteq H$ such that
\[
\Delta_\gamma(s)>0,
\]
then $s$ does not define a single-valued global semantic section on $H$.
In particular, nontrivial loop action on the local representative obstructs interpreting the candidate meaning as one globally consistent feature on that subsystem.
\end{corollary}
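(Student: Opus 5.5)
The plan is to argue by contradiction and invoke the forward direction of Theorem~\ref{thm:cycle-consistency}. Suppose $s$ did define a single-valued global semantic section on $H$. We read this as: for any chart $c\in V_H$ and any two transport paths ending at $c$ from a common base chart, propagating the base representative along either path returns the normalized representative $s_c/\|s_c\|_2$ up to sign. This is path-independence in the sense fixed before Theorem~\ref{thm:cycle-consistency}.

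First, reduce to a cycle based at the root. Take the cycle $\gamma=(c_0,\dots,c_L=c_0)$ with $\Delta_\gamma(s)>0$. Since $H$ is connected, the theorem applies with root $r:=c_0$ and seed $s_{c_0}$; alternatively we may conjugate $\gamma$ by a path from a prescribed root to $c_0$. Along the cycle we compare two paths from $c_0$ to itself: the trivial path, whose representative is $s_{c_0}$, and the path $\gamma$, whose representative is $\mathcal{T}_\gamma s_{c_0}/\|\mathcal{T}_\gamma s_{c_0}\|_2$ after recursive normalization. Normalizing at each step and normalizing only at the end give the same direction up to sign, because positive rescalings commute with the linear maps $T$ and the intermediate sign choices only flip the overall sign. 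Path-independence would force these two representatives to agree up to sign, which means $\Delta_\gamma(s)=0$. This contradicts the hypothesis, so $s$ is not single-valued on $H$.

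The main obstacle is definitional rather than computational. We have to fix that ``single-valued global semantic section'' means exactly path-independent propagation, and not merely $\varepsilon$-coherence; $\varepsilon$-coherence is compatible with $\Delta_\gamma>0$, since edge defects of size up to $\varepsilon$ can accumulate around the loop. We also need $\mathcal{T}_\gamma s_{c_0}\neq 0$ for the defect to be defined, and this is already implicit in the hypothesis $\Delta_\gamma(s)>0$. Finally, we must check that if an intermediate transported vector vanished, propagation along $\gamma$ would be undefined, which again rules out a consistent single-valued section. The ``in particular'' clause then follows: a globally consistent feature on $H$ would give a path-independent section, and the nontrivial loop action rules this out.
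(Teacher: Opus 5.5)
Your proposal is correct and is essentially the paper's argument: the paper's proof takes the contrapositive of the forward direction of Theorem~\ref{thm:cycle-consistency} (path-independence forces every cycle defect to vanish). Your re-rooting at $c_0$ and your check that stepwise and end-point normalization agree up to sign simply make that citation explicit, and the re-rooting also quietly covers the fact that the theorem is phrased only for cycles based at $r$. Your final worry about an intermediate transported vector vanishing is vacuous, since by linearity $\mathcal{T}_\gamma s_{c_0}\neq 0$ already forces every partial transport along $\gamma$ to be nonzero.
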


\begin{proof}
By Theorem~\ref{thm:cycle-consistency}, single-valued globalization requires vanishing cycle defect on every cycle.
A positive cycle defect therefore obstructs global consistency.
\end{proof}

\subsection{Twisted sections}

\begin{figure}[t]
    \centering
    \includegraphics[width=\columnwidth]{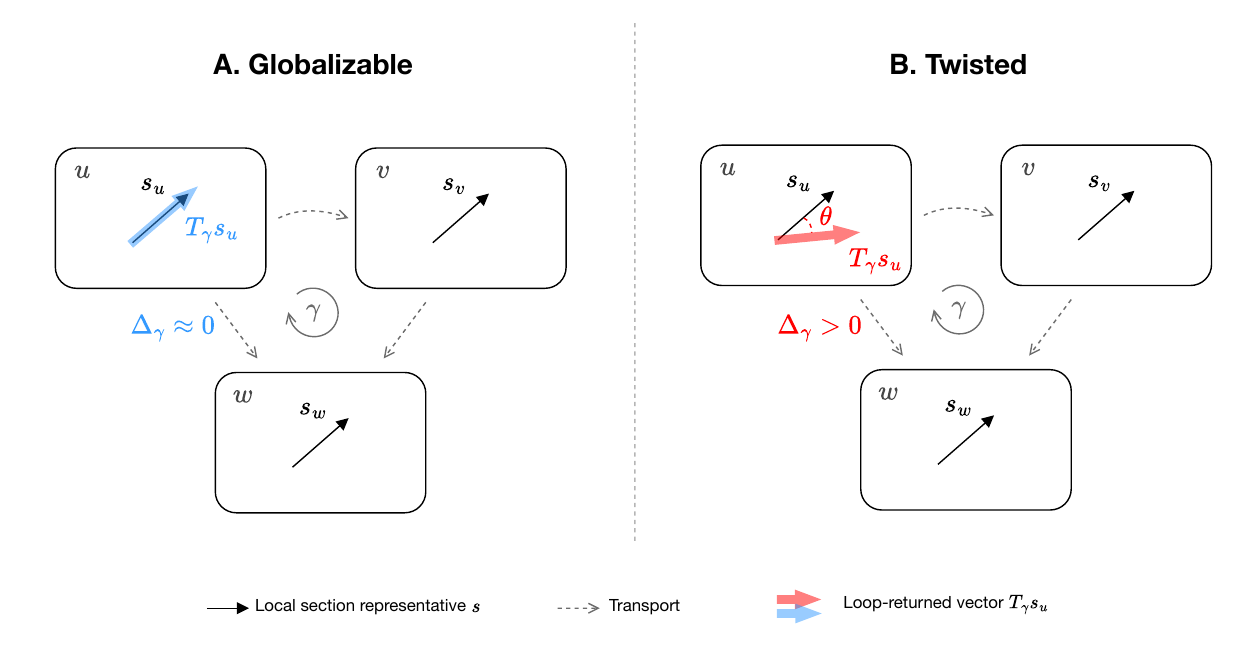}
    \caption{Semantic sections on a cyclic chart subsystem. (A) A globalizable section, where transporting the local representative $s_u$ around a loop $\gamma$ returns a representative $T_\gamma s_u$ that remains nearly aligned with $s_u$, yielding small loop defect $\Delta_\gamma \approx 0$. (B) A twisted section, where the same loop transport returns a rotated representative, so $T_\gamma s_u$ no longer aligns with $s_u$ and the loop defect is nonzero, $\Delta_\gamma > 0$. This illustrates the difference between cycle-consistent semantic globalization and holonomy-obstructed local coherence.}
    \label{fig:globalizable-vs-twisted}
\end{figure}

Figure~\ref{fig:globalizable-vs-twisted} illustrates this distinction schematically: a globalizable section closes consistently around a cycle up to small error, whereas a twisted section remains locally coherent on edges but fails to return to itself under loop transport.

The corollary above motivates a distinct semantic regime.

\begin{definition}[Twisted section]
\label{def:twisted-section}
A section $s$ on a connected cyclic subgraph $H$ is \emph{twisted} if it is locally coherent on edges, in the sense that
\[
\delta_{vu}(s)\le \varepsilon
\qquad
\text{for all oriented edges }u\to v\text{ in }E_H,
\]
but fails the cycle consistency condition on at least one cycle:
\[
\Delta_\gamma(s)>\tau
\]
for some threshold $\tau>0$ and some cycle $\gamma\subseteq H$.
\end{definition}

Twisted sections are therefore not mere failed discoveries or numerical artifacts.
They are locally stable candidate meanings whose failure is specifically local-to-global:
the section remains coherent chartwise, but loop transport induces nontrivial monodromy on the representative.
In this sense, a twisted section is semantically stable on overlaps while remaining globally obstructed.

\section{Discovery and certification of semantic sections}
\label{sec:discovery-certification}

Section~\ref{sec:semantic-sections} defined the target object: a semantic section is a transport-compatible family of local representatives over a chart subsystem.
We now describe the practical pipeline used to discover, certify, classify, and deduplicate such sections in frozen LLM atlases.
The pipeline builds directly on the atlas construction and transport machinery introduced in \cite{javidnia2026gaugetheory}, but changes the semantic target from a putative global feature vector to a locally propagated and cycle-tested section.

\subsection{Seeded propagation on the chart graph}

Fix a chart graph $G=(V,E)$, local feature spaces $\{F_c\}_{c\in V}$, and edge transports $\{T_{vu}\}_{u\to v}$.
Given a seed chart $r\in V$ and a nonzero seed representative $s_r\in F_r$, we construct an initial candidate section by propagating the seed through the chart graph.

For each admissible frontier edge $u\to v$, we form the normalized transported representative
\[
\widetilde s_v^{(u)}
:=
\frac{T_{vu}s_u}{\|T_{vu}s_u\|_2},
\]
whenever $T_{vu}s_u\neq 0$.
Crucially, we do \emph{not} require $\widetilde s_v^{(u)}$ to coincide with a canonical basis axis in chart $v$.
The transported vector itself is the local candidate representative.
This is essential to the semantic-section viewpoint: the object being propagated is an arbitrary local representative glued by transport, not an axis-preserving dictionary atom.

Propagation is restricted by edge-quality filters inherited from the atlas pipeline \cite{javidnia2026gaugetheory}.
In particular, we use overlap size and transport-conditioning statistics to decide which transitions are admissible.
This prevents support growth through numerically degenerate overlaps and ties section discovery to the same local geometry used to build the atlas itself.

\subsection{Synchronization over multiply reached charts}

A chart may be reached along multiple transport routes during propagation.
The resulting incoming representatives need not agree exactly, even when they correspond to the same underlying semantic object.
We therefore perform a synchronization step that aggregates incoming transport evidence at each multiply reached chart.

Suppose chart $v$ receives incoming candidates
\[
\{\widetilde s_v^{(1)},\dots,\widetilde s_v^{(m)}\}.
\]
We choose signs $\sigma_i\in\{\pm 1\}$ to align these representatives and define the synchronized representative
\[
s_v
:=
\frac{\sum_{i=1}^m \sigma_i \widetilde s_v^{(i)}}
{\left\|\sum_{i=1}^m \sigma_i \widetilde s_v^{(i)}\right\|_2}.
\]
Operationally, synchronization converts raw pathwise propagation into a first approximation of a semantic section in the sense of Definition~\ref{def:semantic-section}.
Without this step, the procedure would merely record path-dependent transport traces; with it, the procedure attempts to infer a single local representative that best reconciles the incoming routes.

\subsection{Defect-based pruning and support extraction}

Naive propagation typically produces one of two degenerate outcomes:
either it collapses to tiny supports, or it diffuses across large connected regions of the graph.
To isolate meaningful candidates, we prune by local defect after synchronization.

For a synchronized candidate section $s$ supported on an induced connected subgraph $H\subseteq G$, we compute two defect families.

\paragraph{Edge defect.}
For each oriented edge $u\to v$ in $H$,
\[
\delta_{vu}(s)
:=
\inf_{\sigma\in\{\pm 1\}}
\left\|
\frac{T_{vu}s_u}{\|T_{vu}s_u\|_2}
-
\sigma\frac{s_v}{\|s_v\|_2}
\right\|_2.
\]

\paragraph{Path defect.}
For each chart $c\in V_H$, we compare the synchronized representative $s_c$ with the representative obtained by transporting the seed $s_r$ along the retained parent path from $r$ to $c$.
This measures how far the synchronized section deviates from the original seeded transport trace.

Charts and edges whose defects exceed prescribed thresholds are removed, and the support is restricted to the connected component containing the seed.
The result is a maximal defect-controlled support around the seed.
Empirically, this pruning stage is what separates singleton collapse, diffuse propagation, and nontrivial coherent supports.

\subsection{Cycle-aware section taxonomy}

Section~\ref{sec:semantic-sections} showed that tree-supported and cycle-supported sections must be treated differently.
Accordingly, we classify each discovered candidate using both geometric defect statistics and the cycle structure of its support graph.

For a connected support subgraph $H$, let
\[
\beta_1(H)=|E_H|-|V_H|+1
\]
denote its first Betti number, i.e.\ the number of independent cycles in the support.

\begin{definition}[Empirical section taxonomy]
A discovered section is classified as follows:
\begin{enumerate}[leftmargin=2em]
\item \textbf{Globalizable} if $\beta_1(H)\ge 1$, the mean edge defect is small, and the mean loop holonomy defect is small.
\item \textbf{Twisted} if $\beta_1(H)\ge 1$, the mean edge defect is small, but the mean loop holonomy defect is sufficiently large.
\item \textbf{Tree-local} if $\beta_1(H)=0$ and the mean edge defect is small.
\item \textbf{Fragile} otherwise.
\end{enumerate}
\end{definition}

This taxonomy is not merely a plotting convenience.
It operationalizes the conceptual distinctions established in Section~\ref{sec:semantic-sections}:
tree-local sections provide only pathwise evidence, while globalizable and twisted sections represent genuinely local-to-global semantic structure on cyclic supports.

\subsection{Deduplication of rediscovered sections}

Many seed chart/axis pairs rediscover approximately the same semantic section.
To avoid overstating multiplicity, we deduplicate discovered candidates by support overlap and geometric similarity.

Two discovered sections are merged when:
\begin{enumerate}[leftmargin=2em]
\item they have the same empirical section type,
\item their support sets have sufficiently high Jaccard overlap, and
\item their summary-level geometric statistics (edge defect, alignment, and, when applicable, holonomy) are sufficiently close.
\end{enumerate}

Each resulting cluster is represented by its strongest member, ranked by larger support, larger cycle count, lower defect, and higher alignment.
The deduplicated census therefore estimates the number of \emph{unique} semantic sections rather than the number of successful seed initializations.
This distinction is crucial: without deduplication, one risks confusing seed redundancy with genuine ontology-level structure.

\subsection{Certification viewpoint}

The discovery procedure is intentionally conservative.
A candidate is not treated as meaningful merely because it can be transported from a seed.
Instead, it must survive synchronization, defect-based pruning, and cycle-aware classification.
In this sense, the pipeline is a \emph{certification} procedure:
the strongest objects are those that remain coherent after being forced to confront independent loop constraints.

\paragraph{Interpretation.}
The pipeline operationalizes the central claim of this paper.
A feature is not discovered by fitting one global vector.
Rather, one proposes a local representative, propagates it through the atlas, synchronizes it across overlaps, prunes by defect, and then asks what kind of section remains.
The output is therefore not a single undifferentiated list of ``features,'' but a structured family of semantic sections whose mode of coherence carries the interpretive content.

\subsection{Empirical protocol for section discovery}
\label{sec:section-empirical-protocol}

We apply this discovery-and-certification pipeline to the frozen atlas constructions of \cite{javidnia2026gaugetheory}.
For each model, we begin from a fixed chart graph $G=(V,E)$, local orthonormal chart bases $\{B_c\}_{c\in V}$, and edge-level transport objects derived from overlap statistics.
All experiments in the present paper use layer-$16$ atlases for \texttt{Llama~3.2~3B~Instruct} \cite{grattafiori2024llama3,meta2024llama32card}, \texttt{Qwen~2.5~3B~Instruct} \cite{qwen2024qwen25,qwen2024qwen25card}, and \texttt{Gemma~2~2B~IT} \cite{gemmateam2024gemma2,google2024gemma2card} unless otherwise stated.

\paragraph{Seed set.}
For each model, discovery is initialized from a grid of seed chart/axis pairs.
In the reported layer-$16$ experiments we use the first $16$ seed charts and the first $32$ seed axes per chart, yielding $512$ raw seeded discoveries per model before pruning and deduplication.
This design is intentionally modest:
the goal is not exhaustive enumeration, but a controlled cross-model census of section types under a fixed discovery budget.

\paragraph{Propagation and synchronization settings.}
Discovery uses bounded-depth propagation with maximum depth $8$.
Transitions are filtered by overlap and transport-conditioning statistics inherited from the atlas construction; in the reported runs we require minimum overlap size $256$ and use the default conditioning threshold $s_{\min}=0$.
Synchronization is performed for two passes over multiply reached charts.
These settings allow nontrivial support growth while remaining conservative about weak overlaps.

\paragraph{Defect-based pruning.}
After synchronization, candidates are pruned by local defect.
The reported runs use a maximum admissible edge defect of $0.25$ and a maximum admissible path defect of $0.25$ during support extraction.
These pruning thresholds are stricter than the initial propagation rule, so that the retained support represents a coherent core rather than a diffuse propagation cloud.

\paragraph{Cycle-aware taxonomy thresholds.}
After support extraction, each candidate is classified using the cycle-aware taxonomy above.
The reported counts use the following thresholds:
\begin{itemize}[leftmargin=2em]
\item \textbf{Globalizable:} support size at least $6$, mean edge defect at most $0.16$, mean holonomy defect at most $0.35$, and at least one independent cycle.
\item \textbf{Twisted:} support size at least $4$, mean edge defect at most $0.18$, mean holonomy defect at least $0.35$, and at least one independent cycle.
\item \textbf{Tree-local:} support size at least $4$, mean edge defect at most $0.18$, and zero cycle count.
\item \textbf{Fragile:} all remaining cases.
\end{itemize}
These thresholds are operational rather than ontological.
They are a first operational instantiation of the conceptual distinctions introduced in Section~\ref{sec:semantic-sections}.

\paragraph{Deduplication protocol.}
To estimate the number of unique section objects rather than the number of successful seeds, we deduplicate discovered sections by merging candidates that satisfy three conditions:
\begin{enumerate}[leftmargin=2em]
\item identical section type,
\item support-set Jaccard overlap at least $0.75$, and
\item summary-level geometric discrepancy at most $0.05$.
\end{enumerate}
The representative of each deduplicated cluster is chosen by preferring larger support, larger cycle count, lower edge defect, and higher alignment.

\paragraph{Why this protocol is sufficient for a first census.}
The experiments provide a systematic empirical test of the semantic-section ontology, though they do not exhaust the full model and task space.
The central question is whether the same discovery-and-certification pipeline yields nontrivial section types across models after deduplication.
As the experiments below show, the answer is yes:
the resulting census contains globalizable, twisted, and tree-local regimes, and these patterns remain informative under threshold and layer variation.
The full discovery-and-certification settings are summarized in Table~\ref{tab:section_protocol}.

\begin{table}[t]
\centering
\caption{Section-discovery protocol used for the cross-model section census. All runs use the same discovery and classification settings unless otherwise stated.}
\label{tab:section_protocol}
\begin{tabular}{ll}
\toprule
Quantity & Value \\
\midrule
Seed charts per model & $16$ \\
Seed axes per chart & $32$ \\
Raw seeded discoveries per model & $512$ \\
Maximum propagation depth & $8$ \\
Minimum overlap size & $256$ \\
Transport conditioning threshold & $s_{\min}=0$ \\
Synchronization passes & $2$ \\
Pruning: max edge defect & $0.25$ \\
Pruning: max path defect & $0.25$ \\
Globalizable: min support & $6$ \\
Globalizable: max mean edge defect & $0.16$ \\
Globalizable: max mean holonomy defect & $0.35$ \\
Twisted: min support & $4$ \\
Twisted: max mean edge defect & $0.18$ \\
Twisted: min mean holonomy defect & $0.35$ \\
Tree-local: min support & $4$ \\
Tree-local: max mean edge defect & $0.18$ \\
Deduplication Jaccard threshold & $0.75$ \\
Deduplication metric-gap threshold & $0.05$ \\
\bottomrule
\end{tabular}
\end{table}

\section{Experiments}
\label{sec:experiments}

This section tests the central empirical prediction of the semantic-section view.
If semantic structure in obstructed representation spaces is atlas-native rather than globally dictionary-based, then frozen models should not exhibit one uniform feature regime.
Instead, one should observe a mixture of section types:
some candidate meanings should globalize across cyclic supports with low defect,
some should remain only tree-local,
and some should be locally coherent while becoming holonomy-twisted.
The experiments below test this prediction in stages.
We first establish a cross-model census of section types, then examine their geometric profile and concrete exemplars, and finally show that semantic identity is recovered by sections even when raw global-vector similarity fails.

\subsection{Cross-model census of semantic section types}
\label{sec:section-census}

We begin by asking whether the semantic-section taxonomy appears across multiple model families.
Using the discovery-and-certification pipeline of Section~\ref{sec:discovery-certification}, we classify each deduplicated candidate into one of four empirical types:
\emph{globalizable} (cycle-supported, low-defect, low-holonomy sections),
\emph{twisted} (cycle-supported but holonomy-obstructed sections),
\emph{tree-local} (low-defect sections on acyclic supports),
and \emph{fragile} (all remaining candidates).
Because many seed/axis initializations rediscover the same object, all counts reported below are post-deduplication counts.

\begin{table}[t]
\centering
\caption{Deduplicated cross-model census of semantic section types at layer 16. Counts are reported after merging near-duplicate sections discovered from multiple seeds.}
\label{tab:section_census_cross_model}
\begin{tabular}{lrrrrr}
\toprule
Model & Globalizable & Twisted & Tree-local & Fragile & Total unique \\
\midrule
\texttt{Llama~3.2~3B~Instruct} & 1 & 1 & 2 & 32 & 36 \\
\texttt{Qwen~2.5~3B~Instruct}  & 1 & 2 & 5 & 16 & 24 \\
\texttt{Gemma~2~2B~IT}         & 14 & 4 & 8 & 40 & 66 \\
\bottomrule
\end{tabular}
\end{table}

Table~\ref{tab:section_census_cross_model} shows that the semantic-section taxonomy is nontrivial across all three model families.
Most discovered candidates are fragile, but each model also exhibits a smaller family of coherent section types.
Llama yields a sparse but nondegenerate regime, with one globalizable section, one twisted section, and two tree-local sections.
Qwen also retains nontrivial cycle-supported structure after deduplication, but is dominated by tree-local sections, indicating that in this atlas construction many coherent candidates remain pathwise rather than strongly cycle-certified.
Gemma exhibits the richest section structure, with fourteen globalizable sections, four twisted sections, and eight tree-local sections.

The cross-model differences are themselves informative.
Qwen demonstrates that tree-local coherence can be abundant even when cycle-supported sections are relatively scarce, which reinforces the distinction between weak local coherence and stronger atlas-level semantic globalization.
By contrast, Gemma exhibits a much larger family of cycle-supported globalizable sections, indicating that the same pipeline can recover substantial local-to-global semantic structure in at least some model families.
Llama lies between these regimes: it supports the full taxonomy, but sparsely.
Taken together, these results support the view that semantic sections are neither a one-model artifact nor a trivial consequence of the discovery pipeline, but a model-dependent ontology for feature-like structure in obstructed representation spaces.

\subsection{Type-wise geometric profile of deduplicated sections}
\label{sec:section-type-profile}

The census in Table~\ref{tab:section_census_cross_model} counts how many unique sections of each type appear.
We next ask what geometric profile these types actually exhibit.
For each model and section type, we report the number of unique sections together with the mean support size, mean cycle count, mean edge defect, and mean alignment.

\begin{table*}[t]
\centering
\caption{Type-wise geometric profile of deduplicated semantic sections at layer 16. Support is the number of charts in the section support; cycles counts independent cycles in the induced support subgraph. Lower edge defect and higher alignment indicate stronger transport compatibility.}
\label{tab:section_type_profile}
\resizebox{\textwidth}{!}{
\begin{tabular}{llrrrrr}
\toprule
Model & Section type & Count & Mean support & Mean cycles & Mean edge defect & Mean alignment \\
\midrule
\texttt{Llama~3.2~3B~Instruct} & Globalizable & 1  & 6.0000  & 1.0000 & 0.1367 & 0.9892 \\
\texttt{Llama~3.2~3B~Instruct} & Twisted      & 1  & 4.0000  & 1.0000 & 0.1232 & 0.9908 \\
\texttt{Llama~3.2~3B~Instruct} & Tree-local   & 2  & 4.5000  & 0.0000 & 0.1486 & 0.9887 \\
\texttt{Llama~3.2~3B~Instruct} & Fragile      & 32 & 1.9375  & 0.0938 & 0.0720 & 0.9946 \\
\addlinespace
\texttt{Qwen~2.5~3B~Instruct}  & Globalizable & 1  & 9.0000  & 1.0000 & 0.1106 & 0.9915 \\
\texttt{Qwen~2.5~3B~Instruct}  & Twisted      & 2  & 8.0000  & 1.5000 & 0.1217 & 0.9902 \\
\texttt{Qwen~2.5~3B~Instruct}  & Tree-local   & 5  & 7.2000  & 0.0000 & 0.0569 & 0.9973 \\
\texttt{Qwen~2.5~3B~Instruct}  & Fragile      & 16 & 1.8125  & 0.0000 & 0.0311 & 0.9987 \\
\addlinespace
\texttt{Gemma~2~2B~IT}         & Globalizable & 14 & 12.5000 & 4.2143 & 0.1082 & 0.9928 \\
\texttt{Gemma~2~2B~IT}         & Twisted      & 4  & 9.2500  & 2.0000 & 0.1152 & 0.9909 \\
\texttt{Gemma~2~2B~IT}         & Tree-local   & 8  & 5.1250  & 0.0000 & 0.0961 & 0.9946 \\
\texttt{Gemma~2~2B~IT}         & Fragile      & 40 & 2.3750  & 0.2250 & 0.0765 & 0.9950 \\
\bottomrule
\end{tabular}}
\end{table*}

Table~\ref{tab:section_type_profile} sharpens the interpretation of the section census.
Globalizable and twisted sections are both cycle-supported, but they differ in how holonomy behaves: globalizable sections maintain low defect and low holonomy on cyclic supports, whereas twisted sections remain locally coherent while failing to globalize consistently around loops.
By contrast, tree-local sections have zero cycle count by construction and therefore provide only pathwise coherence.
This distinction is especially important for Qwen, where many deduplicated sections are tree-local rather than genuinely cycle-supported, and for Gemma, where the larger family of globalizable sections indicates richer local-to-global semantic structure under identical discovery settings.

The importance of cycle support is conceptual as well as numerical.
A tree-supported section can always be propagated from a seed without confronting an independent loop constraint, so tree-local coherence is weaker evidence than cycle-supported globalization.
For this reason, the strongest empirical support for the semantic-section ontology comes from the existence of low-defect cyclic sections and from the separation between globalizable and twisted regimes.

\subsection{Representative semantic sections across regimes}
\label{sec:section-exemplars}

To make the ontology concrete, we inspect representative deduplicated sections from the main regimes uncovered by the cross-model census.

We focus on eight exemplars:
two from Llama (one globalizable and one twisted),
three from Qwen (one tree-local control, one globalizable, and one twisted),
and three from Gemma (two globalizable and one twisted).

The goal is not to claim that these are the only important sections, but to show that the discovered objects already exhibit the qualitative behaviors predicted by the theory.
Their geometric profiles are summarized in Table~\ref{tab:section_exemplars}.

\paragraph{Llama: a sparse but complete taxonomy.}
Llama exhibits the full section taxonomy, though sparsely.
Its strongest globalizable exemplar is the unique deduplicated globalizable section discovered from seed $(12,11)$.
It has support size $6$, one independent cycle, mean edge defect $0.1367$, mean alignment $0.9892$, and mean holonomy defect $0.3199$.
Its support spans charts $\{12,26,38,78,108,111\}$.
This is the clearest Llama example of a section that remains coherent on a cyclic support while keeping holonomy below the globalization threshold.

The strongest Llama twisted exemplar is the deduplicated section represented by seed $(15,0)$, with support size $4$, one cycle, mean edge defect $0.1232$, mean alignment $0.9908$, and mean holonomy defect $0.3750$.
Its duplicate cluster merges three seed discoveries, namely $(15,0)$, $(15,7)$, and $(15,8)$, over the chart set $\{15,27,30,101\}$.
This is a useful case because the section remains locally coherent and cycle-supported, but crosses the holonomy threshold and therefore fails to globalize cleanly.
Together, the two Llama exemplars already realize the conceptual distinction between low-holonomy globalization and holonomy-twisted obstruction.

\paragraph{Qwen: tree-local coherence as a control regime.}
Qwen is dominated by tree-local rather than cycle-supported sections under the current discovery protocol.
The clearest control-case exemplar is the large deduplicated tree-local section represented by seed $(5,0)$.
It has support size $15$, zero cycles, mean edge defect approximately $4.2\times 10^{-7}$, and mean alignment numerically equal to $1$.
Its duplicate cluster merges $192$ seed discoveries into the same tree-supported object over the chart set
$\{0,3,5,8,10,15,28,43,72,77,79,83,93,101,115\}$.
This section is important precisely because it is \emph{too coherent}: since its support is tree-like, the section never encounters an independent loop constraint, and pathwise propagation becomes nearly tautological.
Qwen therefore serves as a useful control regime showing that tree-local coherence is much weaker evidence than cycle-supported globalization.

Qwen nevertheless retains a nontrivial cycle-supported signal.
Its strongest deduplicated globalizable exemplar is the section seeded at $(14,10)$, with support size $9$, one cycle, mean edge defect $0.1106$, mean alignment $0.9915$, and mean holonomy defect $0.3363$ over charts $\{1,13,14,52,63,64,96,97,126\}$.
Its strongest twisted exemplar is seeded at $(14,17)$, with support size $9$, two cycles, mean edge defect $0.1218$, mean alignment $0.9898$, and mean holonomy defect $0.3954$ over charts $\{1,13,14,21,52,63,96,97,126\}$.
Thus Qwen is not a negative case for the semantic-section view; rather, it is a model in which tree-local sections dominate while genuinely cycle-supported sections are rarer.

\paragraph{Gemma: rich cycle-supported semantic structure.}
Gemma provides the strongest evidence for the semantic-section ontology in the study.
Its best globalizable exemplar is the deduplicated section seeded at $(14,4)$, with support size $23$, nine cycles, mean edge defect $0.0960$, mean alignment $0.9946$, and mean holonomy defect $0.2245$.
The support spans the cyclic subsystem
$\{3,4,9,13,14,15,16,17,32,33,34,36,61,64,70,86,87,90,95,106,107,111,116\}$.
This is exactly the kind of object that the section view predicts and the global-feature view cannot naturally express: a high-support, low-defect, low-holonomy semantic object distributed across many local charts.

A second Gemma globalizable exemplar is the section seeded at $(5,1)$, with support size $21$, eight cycles, mean edge defect $0.0954$, mean alignment $0.9945$, and mean holonomy defect $0.2146$.
Its support covers charts
$\{0,5,10,18,24,26,35,40,47,49,50,57,59,65,68,89,91,100,102,126,127\}$.
The fact that Gemma contains multiple large, low-defect, low-holonomy cyclic sections strengthens the claim that the ontology is not built around one isolated anecdote.

The strongest Gemma twisted exemplar is the deduplicated section represented by seed $(13,9)$.
It has support size $14$, four cycles, mean edge defect $0.1071$, mean alignment $0.9920$, and mean holonomy defect $0.3971$.
Its duplicate cluster merges three seed discoveries, $(3,1)$, $(3,7)$, and $(13,9)$, over charts
$\{3,4,13,33,34,36,64,70,86,87,95,106,111,116\}$.
This is the clearest illustration in the study of a section that is both robust and obstructed:
its local compatibility is strong, but loop transport prevents a single globally consistent continuation.

\paragraph{Interpretation.}
These exemplars clarify the ontology behind the section census.
The Llama examples show that even a sparse model regime can realize both globalization and twisting.
The Qwen control case shows that tree-local propagation can produce highly coherent objects without yet confronting genuine local-to-global consistency.
The Gemma examples show that large, low-defect, cycle-supported sections can exist in practice and that twisted sections appear as nearby but qualitatively distinct objects.
Taken together, these case studies make the central conceptual point concrete:
in obstructed representation spaces, a feature is better modeled as a transport-compatible local section than as a single global direction or dictionary atom.

\begin{table*}[t]
\centering
\caption{Representative deduplicated semantic sections used as exemplars in Section~\ref{sec:section-exemplars}, illustrating globalizable, twisted, and tree-local regimes across the three model families.}
\label{tab:section_exemplars}
\resizebox{\textwidth}{!}{
\begin{tabular}{llrrrrrl}
\toprule
Model & Type & Seed & Support & Cycles & Mean edge defect & Mean holonomy defect & Representative chart support \\
\midrule
\texttt{Llama~3.2~3B~Instruct} & Globalizable & $(12,11)$ & 6  & 1 & 0.1367 & 0.3199 & $\{12,26,38,78,108,111\}$ \\
\texttt{Llama~3.2~3B~Instruct} & Twisted      & $(15,0)$  & 4  & 1 & 0.1232 & 0.3750 & $\{15,27,30,101\}$ \\
\texttt{Qwen~2.5~3B~Instruct}  & Tree-local   & $(5,0)$   & 15 & 0 & $\approx 0$ & \text{---}    & $\{0,3,5,8,10,15,28,43,72,77,79,83,93,101,115\}$ \\
\texttt{Qwen~2.5~3B~Instruct}  & Globalizable & $(14,10)$ & 9  & 1 & 0.1106 & 0.3363 & $\{1,13,14,52,63,64,96,97,126\}$ \\
\texttt{Qwen~2.5~3B~Instruct}  & Twisted      & $(14,17)$ & 9  & 2 & 0.1218 & 0.3954 & $\{1,13,14,21,52,63,96,97,126\}$ \\
\texttt{Gemma~2~2B~IT}         & Globalizable & $(14,4)$  & 23 & 9 & 0.0960 & 0.2245 & $\{3,4,9,13,14,15,\dots,107,111,116\}$ \\
\texttt{Gemma~2~2B~IT}         & Globalizable & $(5,1)$   & 21 & 8 & 0.0954 & 0.2146 & $\{0,5,10,18,24,26,\dots,102,126,127\}$ \\
\texttt{Gemma~2~2B~IT}         & Twisted      & $(13,9)$  & 14 & 4 & 0.1071 & 0.3971 & $\{3,4,13,33,34,36,64,70,86,87,95,106,111,116\}$ \\
\bottomrule
\end{tabular}}
\end{table*}

\subsection{Semantic sections are not single global vectors}
\label{sec:sections-not-global-vectors}

The semantic-section view predicts that a coherent semantic object need not appear as one globally shared vector across charts.
To test this directly, we compare atlas-native section coherence with raw cross-chart similarity of the local representatives themselves.
For the representative exemplars in Section~\ref{sec:section-exemplars}, together with a small number of additional tree-local baseline sections, we measure pairwise signed cosine similarity between all local representatives in each support, alongside the section's mean edge defect, mean path defect, and, when applicable, mean holonomy defect. This comparison now includes Qwen exemplars as well as Llama and Gemma, allowing us to test whether the same mismatch between section coherence and raw global-vector similarity appears in the tree-local control regime and in Qwen's rarer cycle-supported sections.

\begin{table*}[t]
\centering
\caption{Semantic-section coherence versus raw cross-chart vector similarity for the representative exemplars in Section~\ref{sec:section-exemplars}, together with a small number of additional tree-local baseline sections. Even when a candidate object is section-coherent, its raw local representatives are not close as a single shared vector across charts.}
\label{tab:section_vs_raw_similarity}
\resizebox{\textwidth}{!}{
\begin{tabular}{llrrrrrrrrrr}
\toprule
Model & Type & Seed & Support & Cycles & Mean edge defect & Mean path defect & Mean holonomy defect & Pairwise mean cosine & Pairwise min cosine & Fraction $\ge 0.5$ & Fraction $\ge 0.7$ \\
\midrule
\texttt{Llama~3.2~3B~Instruct}, layer 16 & Globalizable & $(12,11)$ & 6  & 1 & 0.1367 & 0.1003 & 0.3199 & 0.2139 & 0.0796 & 0.0000 & 0.0000 \\
\texttt{Llama~3.2~3B~Instruct}, layer 16 & Twisted      & $(15,0)$  & 4  & 1 & 0.1232 & 0.0832 & 0.3750 & 0.2475 & 0.0615 & 0.1667 & 0.0000 \\
\texttt{Llama~3.2~3B~Instruct}, layer 16 & Tree-local   & $(11,9)$  & 5  & 0 & 0.1419 & 0.1366 & \text{---}    & 0.3028 & 0.0701 & 0.3000 & 0.0000 \\
\texttt{Llama~3.2~3B~Instruct}, layer 16 & Tree-local   & $(0,5)$   & 4  & 0 & 0.1553 & 0.1294 & \text{---}    & 0.1246 & 0.0284 & 0.1667 & 0.0000 \\
\addlinespace
\texttt{Qwen~2.5~3B~Instruct}, layer 16  & Tree-local   & $(5,0)$   & 15 & 0 & 0.0000 & 0.0000 & \text{---}    & 0.3483 & 0.0055 & 0.3143 & 0.0571 \\
\texttt{Qwen~2.5~3B~Instruct}, layer 16  & Globalizable & $(14,10)$ & 9  & 1 & 0.1106 & 0.0942 & 0.3363 & 0.1814 & 0.0069 & 0.0000 & 0.0000 \\
\texttt{Qwen~2.5~3B~Instruct}, layer 16  & Twisted      & $(14,17)$ & 9  & 2 & 0.1218 & 0.0995 & 0.3954 & 0.1536 & 0.0005 & 0.0000 & 0.0000 \\
\addlinespace
\texttt{Gemma~2~2B~IT}, layer 16 & Globalizable & $(14,4)$  & 23 & 9 & 0.0960 & 0.1097 & 0.2245 & 0.2309 & 0.0037 & 0.0474 & 0.0000 \\
\texttt{Gemma~2~2B~IT}, layer 16 & Globalizable & $(5,1)$   & 21 & 8 & 0.0954 & 0.0884 & 0.2146 & 0.1921 & 0.0006 & 0.0429 & 0.0000 \\
\texttt{Gemma~2~2B~IT}, layer 16 & Twisted      & $(13,9)$  & 14 & 4 & 0.1071 & 0.0740 & 0.3971 & 0.1565 & 0.0100 & 0.0110 & 0.0000 \\
\texttt{Gemma~2~2B~IT}, layer 16 & Twisted      & $(10,10)$ & 10 & 1 & 0.0926 & 0.1088 & 0.3700 & 0.1555 & 0.0036 & 0.0000 & 0.0000 \\
\texttt{Gemma~2~2B~IT}, layer 16 & Tree-local   & $(3,13)$  & 8  & 0 & 0.0905 & 0.0938 & \text{---}    & 0.1693 & 0.0074 & 0.0000 & 0.0000 \\
\texttt{Gemma~2~2B~IT}, layer 16 & Tree-local   & $(13,2)$  & 5  & 0 & 0.0495 & 0.0502 & \text{---}    & 0.1974 & 0.0248 & 0.1000 & 0.0000 \\
\bottomrule
\end{tabular}}
\end{table*}

Table~\ref{tab:section_vs_raw_similarity} provides the clearest direct evidence that semantic sections are not reducible to single global feature vectors.
Across Llama, Qwen, and Gemma exemplars, atlas-native coherence remains strong: mean edge defects and mean path defects are low, and globalizable sections are cleanly separated from twisted sections by loop holonomy.
However, the raw local representatives of the same discovered object are not close as one shared vector across charts.
For the Llama globalizable section seeded at $(12,11)$, the pairwise mean signed cosine is only $0.2139$, the minimum pairwise cosine is $0.0796$, and no chart pair exceeds cosine threshold $0.5$.
For the Qwen globalizable and twisted sections seeded at $(14,10)$ and $(14,17)$, the pairwise mean signed cosine is only $0.1814$ and $0.1536$, respectively, with no chart pair exceeding $0.5$ in either case.
Even the large Qwen tree-local control section seeded at $(5,0)$, despite near-perfect pathwise coherence, has pairwise mean cosine only $0.3483$, with only $31.43\%$ of chart pairs above $0.5$ and $5.71\%$ above $0.7$.
For the Gemma globalizable sections seeded at $(14,4)$ and $(5,1)$, the pairwise mean signed cosine is only $0.2309$ and $0.1921$, respectively, with fractions above $0.5$ of only $0.0474$ and $0.0429$, and no chart pair exceeds $0.7$.
Thus these objects are coherent as transport-compatible local families, but not as single globally shared directions.
A raw global-feature baseline based on cross-context vector similarity would therefore fail to identify them as one object, whereas the semantic-section formalism recovers them naturally.

\subsection{Semantic identity is recovered by sections, not by raw global similarity}
\label{sec:section-identity-recovery}

The raw-similarity analysis above is negative evidence against the global-vector ontology:
it shows that coherent semantic sections do not look like one shared vector across charts.
We now turn that observation into a direct recovery test.
For each exemplar section, we treat every pair of charts in its support as a \emph{true same-object pair}, since all of them belong to one discovered semantic section.
We then compare two identity criteria:
\begin{enumerate}[leftmargin=2em]
\item \textbf{Section criterion:} two local representatives are identified as the same semantic object if they belong to the same discovered semantic section.
\item \textbf{Raw global-similarity baseline:} two local representatives are identified as the same object only if their raw signed cosine similarity exceeds a fixed threshold.
\end{enumerate}
This yields a direct question:
how much of the within-section semantic identity can be recovered by raw vector similarity alone across Llama, Qwen, and Gemma exemplars?

\begin{table*}[t]
\centering
\caption{Semantic identity recovery: sections versus raw global similarity. The section criterion recovers all within-section pairs by construction, while raw signed-cosine similarity recovers only a small fraction of them.}
\label{tab:section_identity_recovery}
\resizebox{\textwidth}{!}{
\begin{tabular}{llrrrrrrrrrr}
\toprule
Model & Type & Seed & Support & Cycles & Pairs & Section recovery & Raw @ $0.3$ & Raw @ $0.5$ & Raw @ $0.7$ & Mean cosine & Min cosine \\
\midrule
\texttt{Llama~3.2~3B~Instruct}, layer 16 & Globalizable & $(12,11)$ & 6  & 1 & 15  & 1.0000 & 0.2667 & 0.0000 & 0.0000 & 0.2139 & 0.0796 \\
\texttt{Llama~3.2~3B~Instruct}, layer 16 & Twisted      & $(15,0)$  & 4  & 1 & 6   & 1.0000 & 0.3333 & 0.1667 & 0.0000 & 0.2475 & 0.0615 \\
\texttt{Llama~3.2~3B~Instruct}, layer 16 & Tree-local   & $(11,9)$  & 5  & 0 & 10  & 1.0000 & 0.5000 & 0.3000 & 0.0000 & 0.3028 & 0.0701 \\
\texttt{Llama~3.2~3B~Instruct}, layer 16 & Tree-local   & $(0,5)$   & 4  & 0 & 6   & 1.0000 & 0.1667 & 0.1667 & 0.0000 & 0.1246 & 0.0284 \\
\addlinespace
\texttt{Qwen~2.5~3B~Instruct}, layer 16  & Tree-local   & $(5,0)$   & 15 & 0 & 105 & 1.0000 & 0.5619 & 0.3143 & 0.0571 & 0.3483 & 0.0055 \\
\texttt{Qwen~2.5~3B~Instruct}, layer 16  & Globalizable & $(14,10)$ & 9  & 1 & 36  & 1.0000 & 0.2500 & 0.0000 & 0.0000 & 0.1814 & 0.0069 \\
\texttt{Qwen~2.5~3B~Instruct}, layer 16  & Twisted      & $(14,17)$ & 9  & 2 & 36  & 1.0000 & 0.1389 & 0.0000 & 0.0000 & 0.1536 & 0.0005 \\
\addlinespace
\texttt{Gemma~2~2B~IT}, layer 16 & Globalizable & $(14,4)$  & 23 & 9 & 253 & 1.0000 & 0.3083 & 0.0474 & 0.0000 & 0.2309 & 0.0037 \\
\texttt{Gemma~2~2B~IT}, layer 16 & Globalizable & $(5,1)$   & 21 & 8 & 210 & 1.0000 & 0.1952 & 0.0429 & 0.0000 & 0.1921 & 0.0006 \\
\texttt{Gemma~2~2B~IT}, layer 16 & Twisted      & $(13,9)$  & 14 & 4 & 91  & 1.0000 & 0.1209 & 0.0110 & 0.0000 & 0.1565 & 0.0100 \\
\texttt{Gemma~2~2B~IT}, layer 16 & Twisted      & $(10,10)$ & 10 & 1 & 45  & 1.0000 & 0.1778 & 0.0000 & 0.0000 & 0.1555 & 0.0036 \\
\texttt{Gemma~2~2B~IT}, layer 16 & Tree-local   & $(3,13)$  & 8  & 0 & 28  & 1.0000 & 0.1071 & 0.0000 & 0.0000 & 0.1693 & 0.0074 \\
\texttt{Gemma~2~2B~IT}, layer 16 & Tree-local   & $(13,2)$  & 5  & 0 & 10  & 1.0000 & 0.2000 & 0.1000 & 0.0000 & 0.1974 & 0.0248 \\
\bottomrule
\end{tabular}}
\end{table*}

Table~\ref{tab:section_identity_recovery} gives the strongest exemplar-level support in the paper for the semantic-section ontology.
Across all exemplars, section-based identity recovery is perfect at the support level: once a candidate object has been certified as one semantic section, all within-section pairs are recovered.
By contrast, raw global-similarity recovery is poor.
The failure is especially clear for the strongest globalizable sections.
For the Llama globalizable section seeded at $(12,11)$, raw cosine similarity recovers only $26.67\%$ of within-section pairs at threshold $0.3$, and none at threshold $0.5$.
For the Qwen globalizable section seeded at $(14,10)$, raw similarity recovers only $25.00\%$ of within-section pairs at threshold $0.3$, and none at thresholds $0.5$ or $0.7$.
For the Qwen twisted section seeded at $(14,17)$, raw similarity recovers only $13.89\%$ of within-section pairs at threshold $0.3$, and none at thresholds $0.5$ or $0.7$.
Even the Qwen tree-local control section seeded at $(5,0)$ is only partially recoverable by raw similarity, with recovery $56.19\%$ at threshold $0.3$, $31.43\%$ at threshold $0.5$, and $5.71\%$ at threshold $0.7$ despite its near-perfect pathwise coherence.
For the Gemma globalizable sections seeded at $(14,4)$ and $(5,1)$, raw similarity recovers only $30.83\%$ and $19.52\%$ of within-section pairs at threshold $0.3$, only $4.74\%$ and $4.29\%$ at threshold $0.5$, and none at threshold $0.7$.
Twisted sections are even less recoverable by raw similarity.

The interpretation is direct.
These candidate meanings are stable enough to be recovered as single section objects, but not stable enough to be recognized as one object by a global-vector identity rule.
In other words, semantic identity is preserved by atlas-native transport-compatible organization, not by raw vector equality across contexts.
A practitioner who insists on a global-feature ontology would therefore fragment one semantic object into many apparently unrelated local vectors.
The semantic-section formalism avoids this failure by treating those local representatives as one transport-compatible family.

This result upgrades the semantic-section view from a descriptive taxonomy to an operational replacement for the global-feature ontology.
The issue is not merely that global vectors are imperfect summaries.
Rather, in obstructed regimes they fail at the basic task of semantic identity recovery.
Sections succeed precisely because they provide the appropriate local-to-global object for the obstructed regimes studied here.

\subsection{Aggregate identity recovery across section types}
\label{sec:aggregate-identity-recovery}

The identity-recovery result above is based on representative exemplars.
We next ask whether the same pattern holds in aggregate across all deduplicated sections of each type that admit pairwise recovery analysis.
To preview the layerwise robustness analysis in Section~\ref{sec:section-layer-robustness}, Table~\ref{tab:aggregate_identity_recovery} also includes corresponding aggregate identity-recovery results for \texttt{Llama~3.2~3B~Instruct} at layers 12 and 20.

\begin{table*}[t]
\centering
\caption{Aggregate semantic identity recovery by section type. Across models and layers, section-based recovery is perfect on certified supports, while raw global-similarity recovery remains poor and collapses to zero at stricter thresholds.}
\label{tab:aggregate_identity_recovery}
\resizebox{\textwidth}{!}{
\begin{tabular}{llrrrrrrrr}
\toprule
Model & Type & $n$ sections & Mean support & Mean cycles & Total pairs & Section & Raw @ $0.3$ & Raw @ $0.5$ & Raw @ $0.7$ \\
\midrule
\texttt{Llama~3.2~3B~Instruct}, layer 12 & Globalizable & 1 & 8.0000 & 8.0000 & 28   & 1.0000 & 0.1071 & 0.0714 & 0.0000 \\
\texttt{Llama~3.2~3B~Instruct}, layer 12 & Tree-local   & 5 & 6.4000 & 0.0000 & 101  & 1.0000 & 0.0792 & 0.0000 & 0.0000 \\
\addlinespace
\texttt{Llama~3.2~3B~Instruct}, layer 16 & Globalizable & 1 & 6.0000 & 1.0000 & 15   & 1.0000 & 0.2667 & 0.0000 & 0.0000 \\
\texttt{Llama~3.2~3B~Instruct}, layer 16 & Twisted      & 1 & 4.0000 & 1.0000 & 6    & 1.0000 & 0.3333 & 0.1667 & 0.0000 \\
\texttt{Llama~3.2~3B~Instruct}, layer 16 & Tree-local   & 2 & 4.5000 & 0.0000 & 16   & 1.0000 & 0.3750 & 0.2500 & 0.0000 \\
\addlinespace
\texttt{Llama~3.2~3B~Instruct}, layer 20 & Tree-local   & 2 & 4.5000 & 0.0000 & 16   & 1.0000 & 0.2500 & 0.0000 & 0.0000 \\
\addlinespace
\texttt{Qwen~2.5~3B~Instruct}, layer 16 & Globalizable & 1 & 9.0000 & 1.0000 & 36  & 1.0000 & 0.2500 & 0.0000 & 0.0000 \\
\texttt{Qwen~2.5~3B~Instruct}, layer 16 & Tree-local   & 5 & 7.2000 & 0.0000 & 152 & 1.0000 & 0.4474 & 0.2171 & 0.0395 \\
\texttt{Qwen~2.5~3B~Instruct}, layer 16 & Twisted      & 2 & 8.0000 & 1.5000 & 57  & 1.0000 & 0.1228 & 0.0000 & 0.0000 \\
\addlinespace
\texttt{Gemma~2~2B~IT}, layer 16 & Globalizable & 14 & 12.5000 & 4.2143 & 1221 & 1.0000 & 0.1957 & 0.0262 & 0.0000 \\
\texttt{Gemma~2~2B~IT}, layer 16 & Twisted      & 4  & 9.2500  & 2.0000 & 174  & 1.0000 & 0.1437 & 0.0057 & 0.0000 \\
\texttt{Gemma~2~2B~IT}, layer 16 & Tree-local   & 8  & 5.1250  & 0.0000 & 90   & 1.0000 & 0.2222 & 0.0333 & 0.0000 \\
\bottomrule
\end{tabular}}
\end{table*}

\begin{figure}[t]
    \centering
    \includegraphics[width=\textwidth]{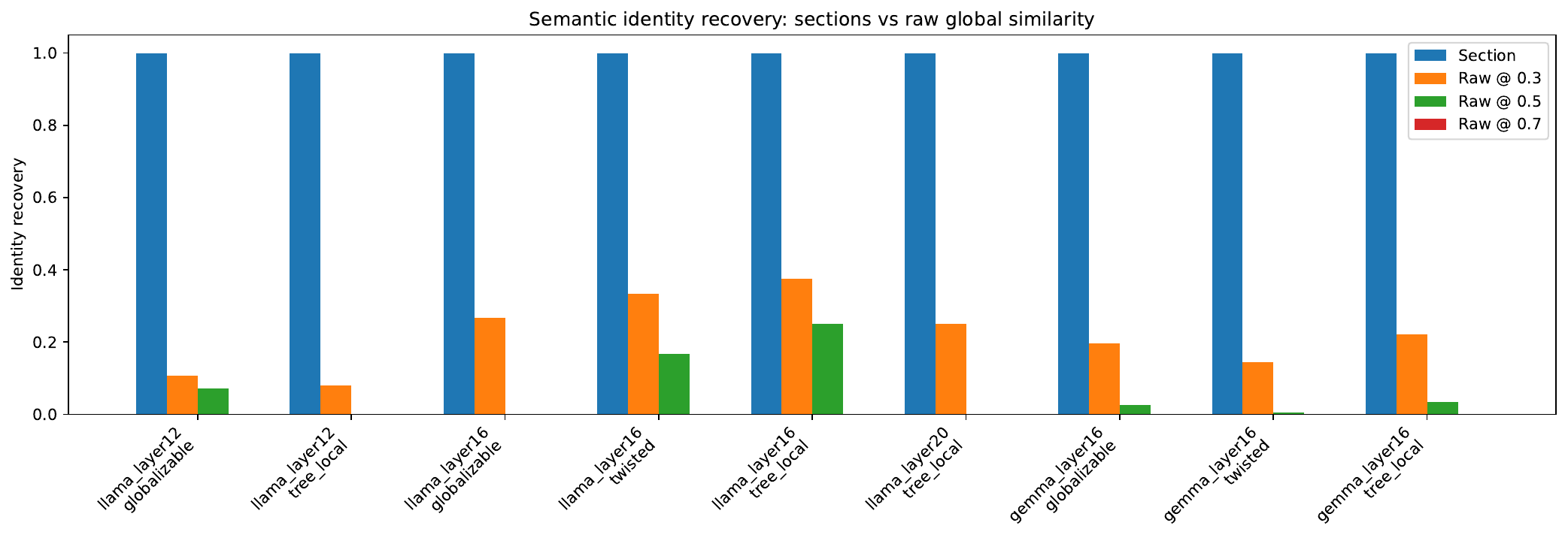}
    \caption{Aggregate semantic identity recovery across section types. Section-based recovery remains perfect on certified supports, while raw global-similarity recovery is consistently lower and typically collapses at stricter thresholds.}
    \label{fig:aggregate_identity_recovery}
\end{figure}

Table~\ref{tab:aggregate_identity_recovery} shows that the section-based identity result is not confined to a few hand-picked examples.
Across the deduplicated globalizable, twisted, and tree-local sections in Table~\ref{tab:aggregate_identity_recovery}, section-level recovery is perfect on the certified supports, whereas raw global-similarity recovery remains poor.
The strongest aggregate result appears for \texttt{Gemma~2~2B~IT}, layer 16 globalizable sections: across $14$ sections and $1221$ true within-section chart pairs, raw signed-cosine similarity recovers only $19.57\%$ of pairs at threshold $0.3$, only $2.62\%$ at threshold $0.5$, and none at threshold $0.7$.
Qwen shows the same qualitative failure in a different regime: across its single deduplicated globalizable section, raw recovery is only $25.00\%$ at threshold $0.3$ and zero at stricter thresholds; across its five deduplicated tree-local sections, raw recovery reaches $44.74\%$ at threshold $0.3$, $21.71\%$ at threshold $0.5$, and $3.95\%$ at threshold $0.7$; and across its two deduplicated twisted sections, raw recovery is only $12.28\%$ at threshold $0.3$ and zero at thresholds $0.5$ and $0.7$.
A similarly sharp failure appears for Gemma twisted sections, where raw recovery drops to $0.57\%$ at threshold $0.5$ and zero at $0.7$, and for \texttt{Llama~3.2~3B~Instruct}, layer 12 globalizable sections, where raw recovery is only $7.14\%$ at threshold $0.5$.
Thus the failure of the global-vector ontology is not anecdotal.
Even when a semantic object is coherent enough to be recovered as one section, raw cross-context vector similarity typically fragments it into apparently unrelated local vectors.
This is precisely the regime in which semantic sections are needed as the correct atlas-native notion of feature identity.
Figure~\ref{fig:aggregate_identity_recovery} visualizes the same pattern: section-based recovery remains uniformly perfect across section types, while raw-similarity recovery is consistently much lower and typically collapses at stricter thresholds.

\subsection{Layerwise robustness within Llama}
\label{sec:section-layer-robustness}

We next test whether the semantic-section picture is confined to a single depth or persists across layers.
To do so, we repeat the discovery, classification, and deduplication pipeline across multiple Llama layers under the same protocol.
In addition to the layer-$16$ results reported above, we evaluate layers $12$ and $20$ using the same seed budget, pruning thresholds, taxonomy thresholds, and deduplication rule.

\begin{table}[t]
\centering
\caption{Deduplicated semantic-section census across Llama layers.}
\label{tab:section_layer_robustness}
\begin{tabular}{lrrrrr}
\toprule
Model / layer & Globalizable & Twisted & Tree-local & Fragile & Total unique \\
\midrule
\texttt{Llama~3.2~3B~Instruct}, layer 12 & 1 & 0 & 5 & 17 & 23 \\
\texttt{Llama~3.2~3B~Instruct}, layer 16 & 1 & 1 & 2 & 32 & 36 \\
\texttt{Llama~3.2~3B~Instruct}, layer 20 & 0 & 0 & 2 & 21 & 23 \\
\bottomrule
\end{tabular}
\end{table}

Table~\ref{tab:section_layer_robustness} shows that the semantic-section picture is not tied to one exceptional layer, but it is also not uniform across depth.
Layer~12 is a weak-positive regime: it contains one deduplicated globalizable section and five tree-local sections, with no twisted sections under the default thresholds.
Layer~16 exhibits the richest Llama taxonomy, containing one globalizable section, one twisted section, and two tree-local sections.
Layer~20 is markedly weaker: under the same protocol it contains no deduplicated cycle-supported globalizable or twisted sections, and only two tree-local sections survive deduplication.

This depth variation is informative rather than problematic.
If semantic sections are the correct ontology in obstructed representation spaces, one should not expect the same balance of section types at every layer.
Instead, different depths may support different modes of local-to-global semantic organization.
The Llama results support exactly this picture:
cycle-supported semantic structure is present at intermediate depths, strongest at layer~16 for the discovery protocol used here, weaker but still nontrivial at layer~12, and substantially attenuated by layer~20.

The strongest layer-$12$ cyclic exemplar is seeded at $(5,5)$, with support size $8$, cycle count $8$, mean edge defect $0.1532$, mean alignment $0.9869$, and mean holonomy defect $0.2522$, placing it in the globalizable regime under the default thresholds.
By contrast, the strongest cyclic candidates at layer~20 remain below the globalization threshold after classification and therefore do not survive as deduplicated globalizable or twisted sections.
Thus the layerwise comparison sharpens the empirical picture:
the semantic-section ontology is not layer-specific, but its strongest cycle-supported realization is depth-dependent.

\subsection{Threshold robustness of the section census}
\label{sec:section-threshold-robustness}

Because the empirical taxonomy depends on explicit geometric thresholds, we finally test whether the qualitative section census is stable under mild threshold variation.
We sweep three parameters around their default values:
the maximum mean edge defect for globalization,
the maximum mean holonomy defect for globalization,
and the deduplication Jaccard threshold.
Concretely, we evaluate the grid
\[
\max \overline{\delta}_{\mathrm{edge}} \in \{0.14,0.16,0.18\},\qquad
\max \overline{\Delta}_{\mathrm{hol}} \in \{0.30,0.35,0.40\},\qquad
J_{\mathrm{dedup}} \in \{0.70,0.75,0.80\},
\]
yielding $27$ settings for each model/layer configuration.

\begin{table}[t]
\centering
\caption{Range of deduplicated section counts under threshold variation. Each entry reports the minimum and maximum count across the $27$ threshold settings.}
\label{tab:section_threshold_robustness}
\begin{tabular}{lccccc}
\toprule
Model / layer & Globalizable & Twisted & Tree-local & Fragile & Total unique \\
\midrule
\texttt{Llama~3.2~3B~Instruct}, layer 12 & $0$--$1$  & $0$--$0$ & $5$--$5$ & $17$--$18$ & $23$--$23$ \\
\texttt{Llama~3.2~3B~Instruct}, layer 16 & $0$--$1$  & $1$--$1$ & $2$--$2$ & $32$--$33$ & $36$--$36$ \\
\texttt{Qwen~2.5~3B~Instruct}, layer 16  & $0$--$1$  & $1$--$2$ & $4$--$5$ & $16$--$17$ & $22$--$24$ \\
\texttt{Gemma~2~2B~IT}, layer 16         & $13$--$18$ & $2$--$5$ & $8$--$8$ & $40$--$47$ & $64$--$74$ \\
\bottomrule
\end{tabular}
\end{table}

Table~\ref{tab:section_threshold_robustness} shows that the main empirical picture is stable under moderate threshold changes.
Llama layer~12 remains a weak-positive regime throughout the sweep: it consistently exhibits five tree-local sections, no twisted sections, and at most one globalizable section.
Llama layer~16 is even more stable structurally: across all $27$ settings it always contains one twisted section and two tree-local sections, while the globalizable count varies only between zero and one.
Thus the sparse but complete Llama layer~16 taxonomy is not a knife-edge artifact of one threshold choice.

Qwen layer~16 also remains qualitatively stable under the sweep.
Across all $27$ settings it retains a predominantly tree-local regime, with tree-local counts ranging from $4$ to $5$, twisted counts ranging from $1$ to $2$, and globalizable counts ranging from $0$ to $1$.
Thus threshold variation changes the precise partition between Qwen's weaker cycle-supported cases, but does not overturn the broader conclusion that Qwen is dominated by tree-local coherence with only limited cycle-supported structure.

Gemma layer~16 remains the strongest regime throughout.
Across the full sweep it always contains eight tree-local sections and multiple cycle-supported sections.
The globalizable count ranges from $13$ to $18$, while the twisted count ranges from $2$ to $5$.
Although the exact partition between globalizable and twisted sections varies numerically, the qualitative conclusion does not: Gemma layer~16 consistently exhibits the richest cycle-supported semantic structure in the study.

Taken together, these sweeps show that the semantic-section ontology is not an artifact of a single threshold setting.
What changes under threshold variation is the fine partition between neighboring coherent regimes, especially in Gemma and, more modestly, in Qwen.
What remains stable is the broader empirical pattern: Llama layer~12 is mostly tree-local, Llama layer~16 supports a sparse but nontrivial full taxonomy, Qwen layer~16 remains predominantly tree-local with limited cycle-supported structure, and Gemma layer~16 exhibits the richest family of cycle-supported globalizable sections.

\section{Conclusion}
\label{sec:conclusion}

This paper proposed an atlas-native replacement for the global-feature ontology in obstructed representation spaces.
The starting point is the observation that a candidate meaning need not assemble into one globally stable vector, even when it remains locally coherent across overlapping contexts.
In such regimes, the appropriate semantic object is not a single direction, dictionary atom, or latent coordinate, but a \emph{semantic section}: a transport-compatible family of local feature representatives defined over a context atlas.

We developed this claim at both the theoretical and empirical levels.
Theoretically, we formalized semantic sections and showed that tree-supported propagation is always pathwise realizable, whereas genuine semantic globalization on cyclic supports is controlled by cycle consistency.
This yields a natural distinction between \emph{tree-local}, \emph{globalizable}, and \emph{twisted} sections.
In particular, holonomy becomes not merely a geometric diagnostic, but an obstruction to single-valued semantic identity.

Empirically, we introduced a discovery-and-certification pipeline based on seeded propagation, synchronization across overlaps, defect-based pruning, cycle-aware taxonomy, and deduplication.
Applied to frozen LLM atlases, this pipeline recovers nontrivial populations of semantic sections across multiple model families.
The resulting section census is structurally heterogeneous rather than uniform:
Gemma exhibits the richest family of cycle-supported globalizable sections,
Llama supports a sparse but nontrivial full taxonomy,
and Qwen provides a useful control regime in which tree-local coherence dominates.
These patterns remain informative under threshold variation and, for Llama, across multiple depths.

The central empirical result is that semantic identity is not recovered by raw global-vector similarity.
Even for certified globalizable sections, raw cross-chart cosine similarity between local representatives is typically low, and raw similarity baselines recover only a small fraction of true within-section pairs, often collapsing to zero at moderate thresholds.
By contrast, section-based identity recovery is perfect on certified supports.
This shows that the failure of the global-vector ontology is not merely descriptive or aesthetic:
in obstructed regimes, it fails at the operational task of recognizing one semantic object across contexts.
Semantic sections succeed because they are the correct local-to-global object.

The broader claim of the paper is therefore ontological.
We do not argue that global features are never useful.
Rather, we argue that they are not the right primitive in the presence of atlas-level obstruction.
When local meanings do not glue into one globally stable vector, they should instead be modeled as transport-compatible local sections.
In that sense, this work extends the atlas-based obstruction framework of \cite{javidnia2026gaugetheory} from a geometric diagnosis of failure to a semantic theory of feature identity.

Several directions remain open.
First, the present experiments provide a substantial cross-model census rather than an exhaustive benchmark, and broader validation across models, layers, and tasks remains important.
Second, the current discovery pipeline is intentionally conservative, and improved synchronization, support extraction, and cycle certification may reveal richer section structure.
Third, semantic sections suggest new interpretability questions beyond discovery alone, including how section objects should be edited, composed, intervened on, or aligned with downstream behavioral semantics.

More generally, the results suggest a shift in how feature structure should be conceptualized in large neural systems.
If representation spaces are locally organized but globally obstructed, then interpretability must also become local-to-global.
The semantic-section formalism is a first step in that direction.

\subsubsection*{Broader Impact Statement}
This paper makes a foundational, methodological, and empirical contribution to mechanistic interpretability research.
It introduces semantic sections as an atlas-native replacement for the global-feature ontology in obstructed representation spaces, develops a concrete discovery-and-certification pipeline, and supports the proposal with quantitative results across multiple frozen LLM atlases.
We do not introduce a new capability model, deploy a user-facing system, or present a direct behavioral intervention on users.

The main positive impact of this work is scientific.
If some neural representations are only locally compatible and fail to admit a globally consistent feature ontology, then interpretability methods based solely on single global feature vectors may systematically misidentify or fragment semantic structure.
By proposing semantic sections as a local-to-global replacement object, this paper aims to improve the fidelity, precision, and scope of interpretability analyses in obstructed representation spaces.

At the same time, improved understanding of internal semantic organization can be used in multiple ways.
It may support beneficial applications such as auditing, debugging, and safety analysis, but it could also be used to refine model behavior, target internal mechanisms more effectively, or improve intervention strategies in ways that are not always aligned with safety or transparency goals.
The present work is upstream and foundational rather than directly deployment-oriented, but downstream uses depend on how the methods are applied.

The paper provides the first substantial empirical census of semantic sections in obstructed regimes, together with a new ontology for describing them; it is not, however, an exhaustive characterization of all model families or all forms of semantic organization.
We therefore view semantic sections as a theoretically motivated and empirically supported replacement object in the settings studied here, while broader validation across architectures, layers, datasets, and downstream tasks remains important.

\subsubsection*{Author Contributions}
Hossein Javidnia conceived the study; developed the theoretical framework; designed and implemented the experimental pipeline; conducted the experiments; analysed and interpreted the results; and wrote the manuscript.

\subsubsection*{Acknowledgments}
The author thanks the ADAPT Centre for providing access to high-performance computing (HPC) resources used for the experimental runs. The ADAPT Centre did not contribute to the study design, analysis, interpretation, or writing.

\bibliography{main}
\bibliographystyle{tmlr}

\end{document}